\renewcommand{\ALG@beginalgorithmic}{\small}
\algrenewcommand\alglinenumber[1]{\small #1:}
\title{Better Embeddings with Coupled Adam}
\author{
Felix Stollenwerk\thanks{Corresponding author: \texttt{felix.stollenwerk@ai.se}} \\ AI Sweden
\And  
Tobias Stollenwerk \\ Forschungszentrum Jülich
}
\begin{document}
\maketitle
\begin{abstract}
Despite their remarkable capabilities, LLMs learn word representations that exhibit the undesirable yet poorly understood feature of anisotropy. 
In this paper, we argue that the second moment in Adam is a cause of anisotropic embeddings, and suggest a modified optimizer called Coupled Adam to mitigate the problem.
Our experiments demonstrate that Coupled Adam significantly improves the quality of embeddings, while also leading to better upstream and downstream performance on large enough datasets. 
\end{abstract}

\section{Introduction}
\paragraph{Anisotropic Embeddings}
Large Language Models (LLMs) take a sequence of tokens as input and predict the next token. An embedding matrix is used to map the input tokens to the hidden space of the model, while an unembedding matrix provides the inverse mapping to the output token space. Although the two matrices can in principle be different, it is common practice to apply weight tying \cite{press-wolf-2017-using} and use the transpose of the embedding matrix for unembedding. 
During training, the model learns an embedding vector in hidden space for each token in the vocabulary. However, it is observed that those embedding vectors are clustered in a small subspace away from the origin \cite{gao2019representationdegenerationproblemtraining}. 
This anisotropy limits the semantic usefulness of the embeddings and, in turn, the expressiveness and generalizability of the model.
Multiple attempts have been made to both explain the root cause of the problem and alleviate it (more on this in Sec.~\ref{sec:related_work}). 
In particular, \citet{bis2021tmic} have shown that the problem can be traced back to a mere shift of the mean embedding vector away from the origin. With the mean embedding vector as reference point, the embeddings feature near-perfect isotropy.
However, the role of the employed optimization algorithm has, to the best of our knowledge, not yet been investigated. 

\paragraph{Optimization Algorithms}
Optimization algorithms are an indispensable ingredient in the training of neural networks generally and LLMs in particular.
While SGD is the foundational optimization technique, Adam \cite{adam} is the most widely used optimization techniques for LLMs due to its superior performance and robustness.
While it provides multiple conceptional advantages over SGD, see e.g. \citet{ruder2017overviewgradientdescentoptimization} for a detailed discussion, the one that is particularly striking 
with regard to word embeddings is that Adam is well-suited for sparse data. More concretely, this means that using Adam, the embedding update vectors for rare words are scaled up in comparison to those of more frequent words. This is relevant in the context of LLMs as word frequencies in the training data are typically very skewed and may differ by several orders of magnitude.
Formally, this is captured by the {\em unigram probability distribution} $\widetilde{p} \in [0, 1]^V$, which for a given dataset $d$ and tokenizer $t$ is defined by
\begin{equation}
\widetilde{p}_i \equiv \widetilde{p}_i(d,t) = \frac{n_i}{\sum_j n_j} \; ,
\label{eq:unigram_probability}
\end{equation}
where $i \in \V \equiv \{1, \dots, V\}$
is the vocabulary index and $n_i$ is the total number of occurrences of the $i$-th token in the tokenized dataset. 
A visualization of an example unigram probability distribution can be found in App.~\ref{app:unigram_probability_example}.

\paragraph{Our Contributions}

In this work, we combine the research areas of anisotropic embeddings and optimization algorithms
and provide the following contributions:
\begin{itemize}
\item We show that the Adam optimizer plays a crucial role in causing anisotropic embeddings. 
\item We suggest \textit{Coupled Adam}, an easy-to-implement yet efficient adjustment of the original Adam optimization algorithm, which is specifically designed for embedding parameters in order to alleviate the anisotropy problem. 
\item We demonstrate that our method not only significantly improves the quality of word embeddings, but also has a beneficial effect on upstream and downstream performance for sufficiently large datasets.
\end{itemize}

\section{On the Root Cause of Anisotropic Embeddings}%
\label{sec:theory}

We study the collective shift of the embeddings (that underlies the anisotropy problem), by analyzing their vector updates based on the optimization algorithms SGD and Adam. Weight tying is assumed, but only contributions from the output layer are considered, following \citet{bis2021tmic}. 
Our results apply to all model architectures with a standard language modeling head.

\subsection{Language Modeling Head}

The equations for the standard language modeling head read
\begin{align}
\mathcal{L} &= - \log{(p_t)} \label{eq:forward_loss} \\
    p_t &= \frac{\exp{(l_t)}}{\sum_{j=1}^V \exp{(l_j)}} \label{eq:forward_probability}\\ %
l_i &= e_i \bigcdot h \label{eq:gradient_function_new} \; ,
\end{align}
where $\mathcal{L} \in \mathbb{R}_{\geq 0}$ is the loss for next token prediction, and $p_t \in [0, 1]$ is the predicted probability of the true token $t \in \V$. $l_i \in \mathbb{R}$ and $e_i \in \mathbb{R}^H$ denote the logits and embeddings for each token $i \in \V$, respectively. $h \in \mathbb{R}^H$ is the final hidden state provided by the model for a single token. Note that the operation in Eq.~(\ref{eq:gradient_function_new}) is the dot product of two vectors in $\mathbb{R}^H$.
Backward propagation yields the following gradients with respect to the input vectors $e_i$ and $h$ of Eq.~(\ref{eq:gradient_function_new}):
\begin{align}
g_i :=~ &\frac{\partial \mathcal{L}}{\partial e_i} 
= - \left( \delta_{it} - p_i \right) \cdot h \label{eq:chain_rule_e} 
\end{align}
This result was first reported using a different notation in \citet{bis2021tmic}, and is rederived in App.~\ref{app:chain_rule_e} for the reader's convenience.

\subsection{Vanishing Sum of Embedding Gradients}

Optimization algorithms for neural networks usually update the model parameters iteratively, using an additive update vector that points in direction opposite to the gradient of the loss with respect to the parameters. In the case of embedding vectors, this can be expressed by
\begin{equation}
e_i^{(\tm)} \: = \: e_i^{(\tm-1)} + u_i^{(\tm)}  \; ,
\label{eq:update_general}
\end{equation}
with
\begin{equation}
u_i^{(\tm)} \: \propto \: - g_i^{(\tm)} \; ,
\label{eq:update_vector_definition}
\end{equation}
where $u_i^{(\tm)}$ is the update vector for $e_i^{(\tm)}$ at time step $\tm$.
Eq.~(\ref{eq:chain_rule_e}) implies that the embedding vector $e_t$ of the true token is updated in direction $+h$, while the update vectors $u_i$ for all the other embedding vectors $e_i$ with $i \neq t$ are proportional to $-h$, see Fig.~\ref{fig:gradients_example}. 
\begin{figure}[t]
\centering
\includegraphics[scale=0.5]{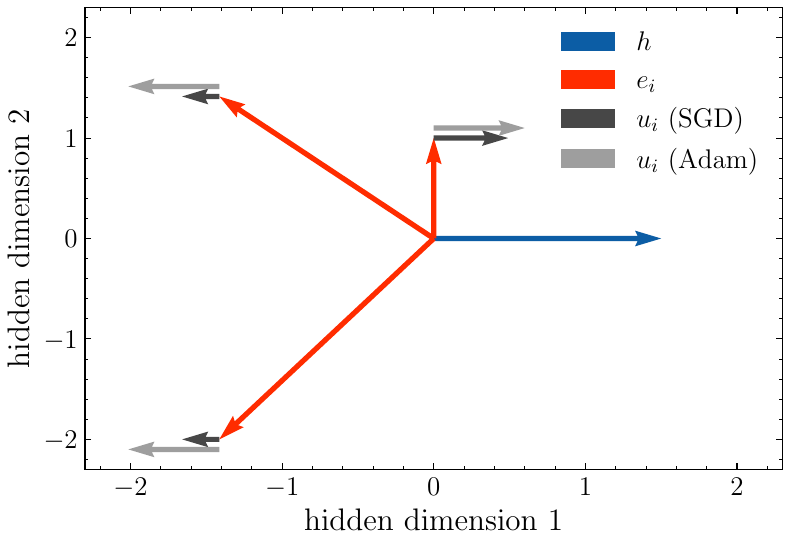}
\caption{Toy example of a hidden state vector $h$ (shown in blue) and three embedding vectors $e_i$ (shown in red) in $H = 2$ dimensions. The gray arrows represent the embedding update vectors, for the SGD (dark) and the Adam (light) optimizer. The update vector of the true token is aligned with $h$, while the others point in the opposite direction, see Eq.~(\ref{eq:chain_rule_e}). Note that the {\em sum of embedding update vectors} vanishes for SGD, while this is not necessarily the case for Adam, cf.~Eqs.~(\ref{eq:vanishing_updates_SGD}) and (\ref{eq:non_vanishing_updates_Adam}).}
\label{fig:gradients_example}
\end{figure}
This circumstance is referred to in the literature as the "common enemy effect" \cite{bis2021tmic}, and regarded as the cause of the representation degeneration problem. 
However, as we will see in the following sections, this explanation is incomplete, as it does not take into account the scaling of the gradients with the predicted probabilities $p_i$, see Eq.~(\ref{eq:chain_rule_e}). The basis for our argumentation is the observation that the {\em sum of embedding gradients vanishes}, as the following simple calculation shows:
\begin{align}
\sum_{i=1}^V g_i^{(\tm)} 
&\stackrel{(\ref{eq:chain_rule_e})}{=} 
- \sum_{i=1}^V  \left( \delta_{it}^{(\tm)} - p_i^{(\tm)} \right) \cdot h^{(\tm)} \nonumber \\
&= - \left( 1 - \sum_{i=1}^V p_i^{(\tm)} \right) \cdot h^{(\tm)} = 0 
\label{eq:optimizer_momentum_conservation}
\end{align}
Next, we will study how Eq.~(\ref{eq:optimizer_momentum_conservation}) translates to the sum $\sum_{i=1}^V u_i^{(\tm)}$
of embedding update vectors, as well as the mean embedding vector
\begin{equation}
\mu^{(\tm)} = \frac{1}{V} \sum_{i=1}^V e_i^{(\tm)}
\label{eq:mu}
\end{equation}
Since the exact definition of the embedding update vector $u_i$, i.e. the proportionality factor in Eq.~(\ref{eq:update_vector_definition}), depends on the optimization algorithm, we discuss SGD and Adam separately.

\subsection{Invariant Mean Embedding with SGD}

We consider the application of the SGD optimization algorithm on the embedding vectors\footnote{Details are given in App.~\ref{app:sgd_algorithm}.}.
At each training step, an embedding vector is simply updated by adding the associated negative gradient $-g_i$, multiplied by a global learning rate $\eta$. Hence, Eq.~(\ref{eq:update_vector_definition}) becomes
\begin{equation}
u_i^{(\tm)} = - \eta \cdot g_i^{(\tm)}
\label{eq:update_vector_definition_SGD}
\end{equation}
Together with Eq.~(\ref{eq:optimizer_momentum_conservation}), this implies that the {\em sum of embedding update vectors vanishes} at any time step $\tm$:
\begin{equation}
\sum_{i=1}^V  u_i^{(\tm)}
\stackrel{(\ref{eq:update_vector_definition_SGD})}{=} - \eta \sum_{i=1}^V g_i^{(\tm)} 
\stackrel{(\ref{eq:optimizer_momentum_conservation})}{=} 0
\label{eq:vanishing_updates_SGD}
\end{equation}
Consequently, the mean embedding vector will stay invariant during the training process:
\begin{equation}
    \mu^{(\tm)} - \mu^{(\tm-1)} 
    \stackrel{(\ref{eq:mu},\ref{eq:update_general})}{=} 
    \frac{1}{V} \sum_{i=1}^V u_i^{(\tm)} 
    \stackrel{(\ref{eq:vanishing_updates_SGD})}{=} 0
    \label{eq:invariant_mean_embedding_SGD}
\end{equation}
This holds even though the different embeddings $e_i$ will be individually updated in different directions with different magnitudes. 
Moreover, all of the above is true also in the case of SGD with momentum,
which follows from linearity and mathematical induction.
Eq.~(\ref{eq:invariant_mean_embedding_SGD}) has far-reaching implications with regard to the anisotropy problem. It entails that the embedding vectors do not collectively shift away from the origin if SGD (with or without momentum) is used. 

\subsection{Shifted Mean Embedding with Adam}

In this section, we analyze the behavior of the mean embedding during optimization with Adam~\cite{adam}, see Algorithm~\ref{alg:algorithm_adam}.
\begin{algorithm}[t]
    \small
    \textbf{Input:}
    $\eta$ (lr), $e_i^{(0)}$ (initial embeddings),
    $\mathcal{L}(e_i)$ (objective), $\beta_1, \beta_2$ (betas), $T$ (number of time steps)
    \\
    \textbf{Initialize:}
    $m_i^{(0)} \leftarrow 0$ (1st moment),
    $v_i^{(0)} \leftarrow 0$ (2nd moment)
    \\
    \textbf{Output}: $e^{(T)}$ (final embeddings)
    \begin{algorithmic}[1]
        \For{$\tm=1 \dots T$}
            \For{$i=1 \dots V$}
                \State $g_i^{(\tm)}$ $\gets$ $\nabla_{e_i} \mathcal{L}^{(\tm)} (e_i^{(\tm-1)})$
                \State $m_i^{(\tm)}$ $\gets$ $\beta_1 m_i^{(\tm-1)} + (1 - \beta_1) g_i^{(\tm)}$%
                \label{alg:line:adam_first_moment_definition}
                \State $v_i^{(\tm)}$ $\gets$ $\beta_2 v_i^{(\tm-1)} + (1-\beta_2) \left(g_i^{(\tm)}\right)^2$%
                \label{alg:line:adam_second_moment_definition}
                \State $\firstmoment$ $\gets$ $m_i^{(\tm)}/\big(1-\beta_1^\tm \big)$%
                \label{alg:line:adam_first_moment_exp_averages}
                \State $\secondmoment$ $\gets$ $v_i^{(\tm)}/\big(1-\beta_2^\tm \big)$%
              \label{alg:line:adam_second_moment_exp_averages}
            \EndFor
            \BeginBox[fill=\colhighlight!10!White, xshift=0.6em, inner xsep=-0.7em]
            \If{\highlight{coupled}}
                \State $\highlight{\secondmomentavg}$ $\gets$ $\highlight{\frac{1}{V} \sum_{i=1}^V \secondmoment}$ %
                \label{alg:line:coupled_adam_second_moment}%
            \EndIf
            \For{$i=1 \dots V$}
                \If{\highlight{coupled}}
                    \State $\highlight{\secondmoment}$ $\gets$ $\highlight{\secondmomentavg}$%
                    \label{alg:line:coupled_adam_second_moment_2}
                \EndIf
            \EndBox
                \State $e_i^{(\tm)}$ $\gets$ $e_i^{(\tm-1)} - \eta \frac{\firstmoment}{\sqrt{\secondmoment} + \epsilon}$%
                \label{alg:line:adam_update}
            \EndFor
        \EndFor
        \State \Return $e^{(T)}$
    \end{algorithmic}
    \caption{Pseudocode for the Adam algorithm and our extension, the \highlight{Coupled Adam algorithm (highlighted)}, applied to the embedding vectors $e_i$. Note that weight decay is not applied.}
    \label{alg:algorithm_adam}
\end{algorithm}

The update vector~Eq.~(\ref{eq:update_vector_definition}) for the Adam algorithm is given by
\begin{equation}
    u_i^{(\tm)} 
    = 
    - \eta^{(\tm)}_i \cdot \firstmoment \label{eq:update_vector_definition_Adam}  \; ,
\end{equation}
where we have introduced an $i$-dependent effective learning rate 
\begin{equation}
    \eta^{(\tm)}_i
    := 
    \frac{\eta}{\sqrt{\secondmoment} + \epsilon} \label{eq:adam_learning_rate} 
\end{equation}
Note that $\firstmoment$ and $\secondmoment$ denote the exponentially averaged first and second moments, respectively, defined according to lines~\ref{alg:line:adam_first_moment_definition}-\ref{alg:line:adam_second_moment_exp_averages}
in Algorithm~\ref{alg:algorithm_adam}.
The $i$-dependent learning rate serves the purpose of individually normalizing the update vectors for different parameters in the Adam optimizer.
However, it also has an unwanted effect specifically on the embedding vectors. While we know from Eq.~\eqref{eq:optimizer_momentum_conservation} and Algorithm~\ref{alg:algorithm_adam} (lines~\ref{alg:line:adam_first_moment_definition},\ref{alg:line:adam_first_moment_exp_averages}) that the {\em unweighted} sum over the first moments vanishes, 
$\sum_{i=1}^V \firstmoment = 0$,
this is not true for the {\em weighted} sum,
\begin{equation}
\sum_{i=1}^V \eta^{(\tm)}_i \firstmoment \neq 0 \; ,
\label{eq:non_vanishing_weighted_sum_of_first_moments_adam}
\end{equation}
unless $\eta^{(\tm)}_i = \eta^{(\tm)}_j$ for all $i, j \in \V$.
Hence, the \textit{sum of embedding update vectors does not vanish} in general,
\begin{equation}
\sum_{i=1}^V  u_i^{(\tm)}
\stackrel{(\ref{eq:update_vector_definition_Adam})}{=} - \sum_{i=1}^V \eta^{(\tm)}_i \cdot \firstmoment 
\stackrel{(\ref{eq:non_vanishing_weighted_sum_of_first_moments_adam})}{\neq} 0
\label{eq:non_vanishing_updates_Adam}
\end{equation}
This, in turn, causes the mean embedding to change during training,
\begin{equation} 
    \mu^{(\tm)} - \mu^{(\tm-1)} 
    \stackrel{(\ref{eq:mu},\ref{eq:update_general})}{=} 
    \frac{1}{V} \sum_{i=1}^V u_i^{(\tm)}
    \stackrel{(\ref{eq:non_vanishing_updates_Adam})}{\neq} 0 \; ,
    \label{eq:mean_embedding_change_adam}
\end{equation}
which is in stark contrast to the case of SGD (cf.~Eq.~\eqref{eq:invariant_mean_embedding_SGD}).
We have thus identified that an $i$-dependency of the second moment $\secondmoment$ of the Adam optimizer leads to the observed collective shift of the embedding vectors away from the origin.
Next, we will show that the second moment indeed depends on $i$. More concretely, we will argue that its expectation value is proportional to the unigram probability\footnote{Note that from here until Eq.~(\ref{eq:optimizer_update_second_moment_avg_canonical}), the time index ($\tau$) is dropped for the sake of readability.} (see Eq.~(\ref{eq:unigram_probability})),
\begin{equation}
    \E \left[ \secondmomentshort \right] \propto \widetilde p_i 
    \label{eq:second_moment_linear_in_unigram_prob}
\end{equation}
In App.~\ref{app:second_moment_theory}, Eq.~(\ref{eq:second_moment_linear_in_unigram_prob}) is derived using minimal assumptions and experimental input.
Here, we restrict ourselves to confirming the relationship in a purely experimental manner. 
$\E \left[ \secondmomentshort \right]$ is estimated directly by measuring $\secondmomentshort$ multiple times during training, using different models. 
We then perform linear fits of $\E \left[ \secondmomentshort \right]$ as a function of $\widetilde p_i$. 
Indeed, the fits yield a high coefficient of determination, on average $R^2 = 0.85(7)$, and a proportionality constant of 
\begin{equation}
A := \frac{\E \left[ \secondmomentshort \right]}{\widetilde p_i} \approx 10^{-4}
\label{eq:second_moment_proportionality_constant}
\end{equation}
Details about the exact procedure and plots showing the data and linear fits can be found in App.~\ref{app:second_moment_empirical}.

\section{Coupled Adam}
\label{sec:lmwap}

In the previous section, we have identified the individual scales of the second moments $v_i$ for different embedding vectors $e_i$ as the root cause of the anisotropy problem. This implies that a solution to the problem is to enforce that the second moments are the same for every $i$.
The question arises whether and how this can be done in the best way, without harming the performance of the model.
To answer this, we note that the normalization of the embedding update vector by the Adam second moment can be split into two parts:
\begin{equation}
\E \left[ \secondmomentshort \right] 
\stackrel{(\ref{eq:second_moment_proportionality_constant})}{=} A \cdot \widetilde p_i
= \frac{A}{V} \cdot \left( \widetilde p_i V \right)
\label{eq:second_moment_factorization}
\end{equation}
The first factor introduces a global scale to all update vectors simultaneously:
\begin{equation}
    \frac{A}{V} \stackrel{(\ref{eq:second_moment_proportionality_constant})}{\approx} \frac{10^{-4}}{5 \cdot 10^{4}} = 2 \cdot 10^{-9} \; ,
\label{eq:second_moment_global_factor}
\end{equation}
where the numbers correspond to our experiments from the previous section with $V \approx 50000$.
The second factor scales the update vectors individually. It is one on average:
\begin{equation}
\frac{1}{V} \sum_{i=1}^{V} \left( \widetilde p_i V \right) = 1
\label{eq:second_moment_individual_factor}
\end{equation}
Our goal is to retain the first, global factor and get rid of the second, individual factor. 
The canonical way to do this is to simply take the average of the second moment over the vocabulary items $i$:
\begin{equation}
\frac{1}{V} \sum_{i=1}^{V} \E \left[ \secondmomentshort \right]
\stackrel{(\ref{eq:second_moment_factorization}, \ref{eq:second_moment_individual_factor})}{=} \frac{A}{V}
\label{eq:optimizer_update_second_moment_avg_canonical}
\end{equation}
In practice, the exponentially averaged second moments $\secondmoment$ as they appear in Eq.~(\ref{eq:adam_learning_rate}) are replaced by their average:
\begin{align}
\secondmomentavg \: &:= \: \frac{1}{V} \sum_{i=1}^V \secondmoment
\label{eq:optimizer_update_second_moment_avg}
\end{align}
We call the resulting algorithm \textit{Coupled Adam}, as it couples the second moments of the embedding vectors via Eq.~(\ref{eq:optimizer_update_second_moment_avg}). 
It is displayed in Algorithm~\ref{alg:algorithm_adam}.
Evidently, with Coupled Adam, the effective learning rate in Eq.~(\ref{eq:adam_learning_rate}) that enters the update vector in Eq.~(\ref{eq:update_vector_definition_Adam}) becomes independent of $i$. Hence, like SGD but unlike standard Adam, the sum of embedding updates vanishes.
However, like standard Adam but unlike SGD, Coupled Adam uses a second moment to normalize the embedding update vectors. 

\section{Experiments}
\label{sec:experiments}

Two types of experiments are conducted to study the impact of coupling the second moments of the embedding update vectors. 
First, a set of small-scale experiments (Sec.~\ref{sec:experiments_S}) with models and datasets of varying sizes up to 1B parameters and 20B tokens, respectively.
Afterwards, we perform a few large-scale experiments (Sec.~\ref{sec:experiments_L}) to verify that the usefulness of our method extrapolates to the realm of large language models with more than 1B parameters trained on at least the corresponding compute-optimal \cite{hoffmann2022trainingcomputeoptimallargelanguage} amount of data.
In order to verify the generalizability of our method, the small- and large-scale experiments involve different datasets, training frameworks and dense transformer model architectures. 
An overview of the model and dataset sizes employed in our experiments is given in App.~\ref{app:experiments_overview}.
For each combination, two models are trained: one using standard Adam and one using Coupled Adam for the embeddings, see Eq.~(\ref{eq:optimizer_update_second_moment_avg}). Both variants use standard Adam for all non-embedding parameters. The various metrics we employ to assess both the general model performance and the quality of the model embeddings will be discussed in Sec.~\ref{sec:experiments_evaluation}.
\subsection{Small-scale Experiments}
\label{sec:experiments_S}
Our small-scale experiments use the OpenWebText Corpus \cite{Gokaslan2019OpenWeb} and the GPT-2 tokenizer \cite{radford2019language}. The model architecture also follows GPT-2, while the hyperparameter setup is taken from GPT-3 \cite{brown2020languagemodelsfewshotlearners}, see App.~\ref{app:hyperparameters} for further details. An implementation based on nanoGPT \cite{Karpathy2022} is used. 
We define a grid $(D, N)$ with dataset sizes 
$D \in \{ 5\B, 10\B, 20\B \}$
and model sizes 
$N \in \{ 125\M, 355\M, 760\M \}$,
and repeat each experiment $S = 3$ times with different seeds in order to estimate uncertainties and assess statistical significance.
\subsection{Large-scale Experiments}
\label{sec:experiments_L}
For our large-scale experiments, we use the SlimPajama dataset \cite{cerebras2023slimpajama} and the GPT-2 tokenizer. A state-of-the-art dense transformer model architecture akin to \cite{touvron2023llama2openfoundation} 
is chosen, including e.g. RoPE embeddings \cite{su2023roformerenhancedtransformerrotary} and the SwiGLU activation function \cite{shazeer2020gluvariantsimprovetransformer}.
Details can be found in App.~\ref{app:hyperparameters}. The experiments are conducted using Modalities \citep{modalities} as the training framework.
We consider two model sizes, 1.3B and 2.6B. In order to cover the two common scenarios of compute-optimal training and overtraining, we conduct two sets of experiments: Firstly, we use near compute-optimal dataset sizes, 26B and 52B tokens, respectively. Secondly, we increase the number of tokens by a factor 4, resulting in 105B and 210B tokens, respectively.
Each large-scale experiment is performed $S = 1$ times.

\subsection{Evaluation}
\label{sec:experiments_evaluation}
Upstream performance is measured in terms of test loss, while downstream performance is evaluated using the Language Model Evaluation Harness \cite{eval-harness} on the following tasks: ARC easy and challenge \cite{clark2018thinksolvedquestionanswering}, HellaSwag \cite{zellers-etal-2019-hellaswag}, LAMBADA \cite{paperno-etal-2016-lambada}, RACE \cite{lai-etal-2017-race}, TruthfulQA \cite{lin-etal-2022-truthfulqa} and WinoGrande \cite{Sakaguchi_LeBras_Bhagavatula_Choi_2020}.
More concretely, the considered metric is the average\footnote{Individual task performance is reported in App.~\ref{app:additional_results_downstream}.} accuracy, which we will denote by $\Acc$.
To assess the quality of the embeddings, we first compute their isotropy, defined as \cite{arora-etal-2016-latent, mu2018allbutthetopsimpleeffectivepostprocessing}
\begin{equation}
\ISO(E) := \frac{\min_{c \in X} Z(c)}{\max_{c \in X} Z(c)} \; ,
\label{eq:isotropy}
\end{equation}
where $E \in \mathbb{R}^{H \times V}$ is the embedding matrix,
$Z(c) = \sum_{i=1}^V \exp(c^T e_i)$ 
is the partition function and $X = \{ c \}$ is the set of eigenvectors $c \in \mathbb{R}^{H}$ of $E E^T \in \mathbb{R}^{H \times H}$.
Secondly, the 2-norm $\munorm$ of the mean embedding, see Eq.~(\ref{eq:mu}), and the average 2-norm of the embeddings
$\overline{\|e_i\|} = \frac{1}{V} \sum_{i=1}^V \|e_i\|$
as well as their ratio 
\begin{equation}
\munormrel := \munorm / \overline{\|e_i\|}
\end{equation}
are determined. 
In addition, we evaluate the models on embedding benchmarks for word similarity and relatedness, to assess how well they represent semantic meaning. Following \citet{bis2021tmic}, we consider the benchmarks SimLex999 \cite{hill-etal-2015-simlex}, MEN \cite{10.5555/2655713.2655714}, WordSim353 \cite{finkelstein} and Stanford Rare Words \cite{luong-etal-2013-better}. Each dataset provides pairs of words labeled with a ground truth score that represents the words' semantic similarity. 
We derive model scores from the cosine similarity of the corresponding embedding vectors, and report the Pearson correlation of the two scores averaged over the datasets, which we denote by $\rcos$.
Finally, some additional important properties of the embedding matrix are investigated.
We study the correlation between the length of an embedding vector and the unigram probability,
\begin{equation}
\rho := 100 \cdot \text{corr} \big( (\|e_i\|)_{i=1}^V, \widetilde p \big) \; ,
\label{eq:rho} %
\end{equation}
to measure how well the former represents the latter.
Furthermore, the condition number $\kappa$, defined as the ratio of the smallest and largest singular values of the embedding matrix, is determined in percent:
\begin{equation}
\kappa := 100 \cdot \frac{\min_i \Sigma_{ii}}{\max_i \Sigma_{ii}}
\label{eq:kappa}
\end{equation}
Here, $E = U \Sigma V^T$ denotes the singular value decomposition of the embedding matrix.

\CatchFileDef{\resultsS}{tables/results_S.tex}{}
\CatchFileDef{\resultsL}{tables/results_L.tex}{}

\begin{table*}[!ht]
\centering
\scriptsize
\begin{tabular}{ccc|rrrrrrrr}
\toprule
$D$ & $N$ & Adam & $\Loss$ ($\downarrow$) & $\Acc$ ($\uparrow$) & $\ISO$ ($\uparrow$) & $\munorm$ ($\downarrow$) & $\munormrel$ ($\downarrow$) & $\rcos$ ($\uparrow$) & $\rho$ ($\uparrow$) & $\kappa$ ($\uparrow$) \\ 
\midrule
\resultsS
\bottomrule 
\end{tabular}
\caption{Results of our small-scale experiments. $D$ and $N$ denote the dataset and model size, respectively. $\Loss$ is the test loss, and the column $\Acc$ represents the accuracy averaged over the downstream tasks listed in Sec.~\ref{sec:experiments_evaluation}. The other evaluation metrics are defined in the same section, see Eqs.~(\ref{eq:isotropy})-(\ref{eq:kappa}). The arrow in parentheses indicates whether a higher or lower value is desirable. Every training was conducted $S=3$ times with different seeds, and the numbers represent the (rounded) averages and standard deviations in the following shorthand notation format: $0.123${\scriptsize~$(4)$} $\equiv 0.123 \pm 0.004$. For each combination $(D, N)$ and each metric, the respective better value is highlighted in bold if the (unrounded) difference is significant according to Student's t-test with a one-sided confidence level of $\alpha = 95\%$ (see App.~\ref{app:error} for details). Plots for $\Loss$ and $\Acc$ are shown in Fig.~\ref{fig:results_S}.}
\label{tab:results_S}
\end{table*}
\begin{figure*}[!ht]
    \centering
    \includegraphics[scale=0.5]{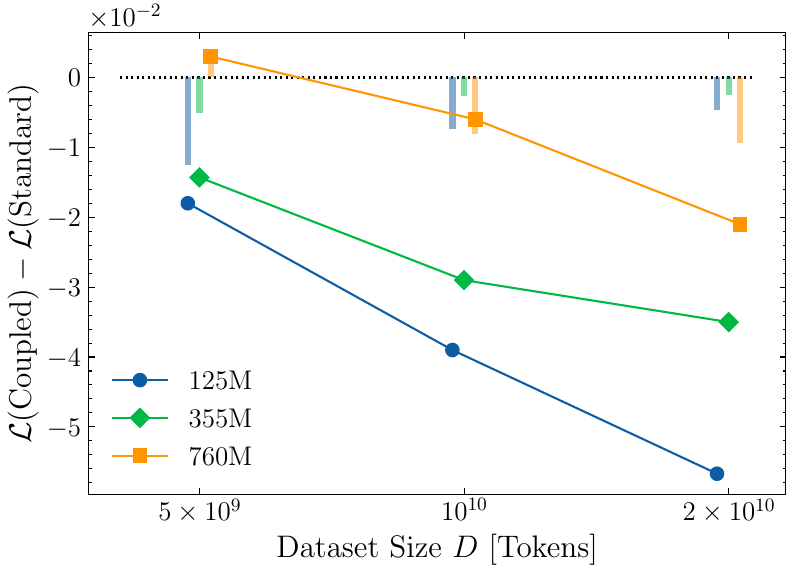} \qquad
    \includegraphics[scale=0.5]{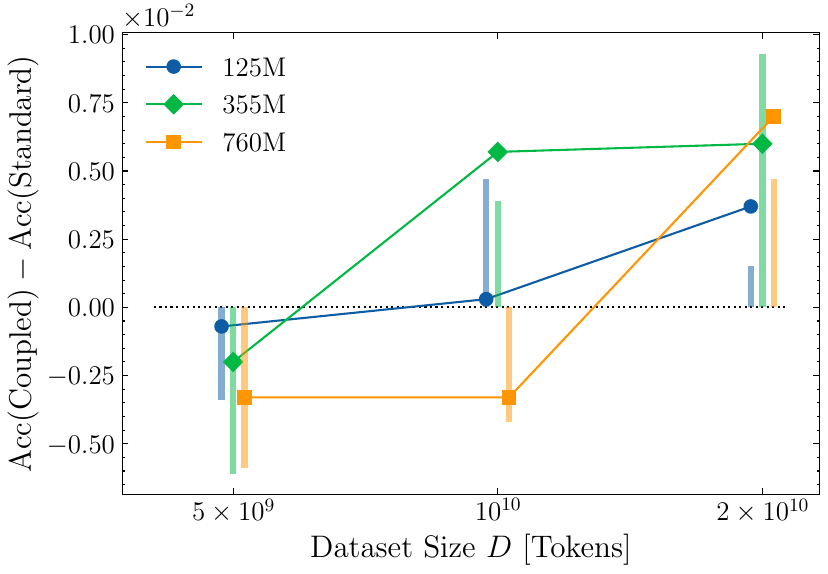}
    \caption{Difference in loss (left) and average downstream task accuracy (right) between Coupled Adam and standard Adam, for the different dataset sizes $D$ (horizontal axis) and model sizes $N$ (colors) of the small-scale experiments. The vertical bars indicate the one-sided $95\%$ confidence interval for the difference to be significant. In order to avoid overlaps, the data points for $N=125\M$ and $N=760\M$ have been slightly shifted to the left and right, respectively.}
    \label{fig:results_S}
\end{figure*}

\section{Results}
\label{sec:results}

\subsection{Small-scale Experiments}
\label{sec:results_S}

The results of the small-scale experiments (Sec.~\ref{sec:experiments_S}) are listed in Tab.~\ref{tab:results_S} and illustrated in Fig.~\ref{fig:results_S}.
We find that both upstream and downstream performance are better with Coupled Adam if the dataset size is sufficiently large. In fact, the improvement appears to increase monotonically with the dataset size $D$. 
In addition, the embedding-specific metrics benefit greatly from Coupled Adam. In particular, the isotropy reaches values above $0.90$ (with a single exception), while $\rcos$ and $\kappa$ are hugely improved as well. The mean embedding is evidently close to the origin. Finally, Coupled Adam leads to a significantly stronger (positive) correlation $\rho$ between the length of an embedding vector and its associated unigram probability.

\subsection{Large-scale Experiments}
\label{sec:results_L}

The results of the large-scale experiments (Sec.~\ref{sec:experiments_L}) are shown in Tab.~\ref{tab:results_L}.
\begin{table*}[ht]
\centering
\scriptsize
\begin{tabular}{lll|rrrrrrrr}
\toprule
$D$ & $N$ & Adam & $\Loss$ ($\downarrow$) & $\Acc$ ($\uparrow$) & $\ISO$ ($\uparrow$) & $\munorm$ ($\downarrow$) & $\munormrel$ ($\downarrow$) & $\rcos$ ($\uparrow$) & $\rho$ ($\uparrow$) & $\kappa$ ($\uparrow$) \\ 
\midrule
\resultsL
\bottomrule 
\end{tabular}
\caption{Results of our large-scale experiments. See the caption of Tab.~\ref{tab:results_S} for an explanation of the column names. For each combination $(D, N)$ and each metric, the respective better value is highlighted in bold.}
\label{tab:results_L}
\end{table*}
We observe very similar patterns as for the small-scale experiments. Although upstream and downstream performance are worse with Coupled Adam for compute-optimal dataset sizes, they are better if 4 times larger datasets are used. Note that for the small-scale experiments, the upstream and downstream performance were found to be better already for compute-optimal dataset sizes. We attribute this to the fact that the batch size for the large-scale experiments is five times larger (cf.~App.~\ref{app:hyperparameters}), which results in fewer optimization steps for the same dataset size.
Regarding the embedding-specific metrics, we again find significant and consistent improvements throughout all experiments. 
However, we do observe a certain shift of the mean embedding vector away from the origin, even if Coupled Adam is used. The shift becomes more pronounced as the model and dataset sizes increase, and is also reflected in a reduced isotropy.
As we shall see in the following section, it comes along with optimal model performance though. 
An obvious hypothesis in light of our analysis in Sec.~\ref{sec:theory} is that the residual shift of the mean embeddings is due to weight tying. This is supported by the results of \citet{machina-mercer-2024-anisotropy}, who find improved isotropy for models without weight tying. We leave it for future work to verify the hypothesis.

\CatchFileDef{\resultsAblationsScaleSmall}{tables/results_ablations_scale_small.tex}{}
\CatchFileDef{\resultsAblationsSGDAll}{tables/results_ablations_sgd_all.tex}{}

\section{Ablations}
\label{sec:ablations}

We perform some additional experiments to shed further light on how Coupled Adam works. A model size of $N = 125\M$ and the dataset sizes $D \in \{ 5\B, 10\B, 20\B \}$ from the small-scale experiments (Sec.~\ref{sec:experiments_S}) are used, and each experiment is repeated $S = 3$ times with different seeds. 

\subsection{Scaled Coupled Adam}
\label{sec:ablation_different_learning_rate}

While coupling the second moment of the embedding gradients using the average in Eq.~(\ref{eq:optimizer_update_second_moment_avg}) is the canonical choice, one could also use a multiple of the average. We conduct additional experiments where the coupled second moment is scaled by powers of $2$:
\begin{equation}
\secondmomentavg \: \to \: 2^{-n} \cdot \secondmomentavg  \; ,
\label{eq:optimizer_update_second_moment_avg_scaled}
\end{equation}
with scaling exponents
$n \in \{ z \in \mathbb{Z} ~| -5 \leq z \leq 5 \}$.
Note that using a scaling exponent $n \neq 0$ is equivalent to using a different effective learning rate for the embeddings than for all the other parameters, via Eqs.~(\ref{eq:optimizer_update_second_moment_avg}) and (\ref{eq:adam_learning_rate}). In particular, a smaller scaling exponent $n$ corresponds to a smaller effective learning rate and vice versa. 
The results for $D=20\B$ are shown in Tab.~\ref{tab:results_ablations_scale}, 
and the dependency of the loss on the scaling exponent $n$ for that very dataset size is visualized in Fig.~\ref{fig:ablation_scale}.
\begin{figure*}[t]
\begin{minipage}{\textwidth}
  \begin{minipage}[b]{0.63\textwidth}
    \centering
    \scriptsize
    \begin{tabular}{c|rrrrrrrr}
    \toprule
    $n$ & $\Loss$ ($\downarrow$) & $\Acc$ ($\uparrow$) & $\ISO$ ($\uparrow$) & $\munorm$ ($\downarrow$) & $\munormrel$ ($\downarrow$) & $\rcos$ ($\uparrow$) & $\rho$ ($\uparrow$) & $\kappa$ ($\uparrow$) \\ 
    \midrule
    \resultsAblationsScaleSmall
    \bottomrule 
    \end{tabular}
    \captionof{table}{Results of our experiments with Scaled Coupled Adam, for $N=125\M$ and $D=20\B$. Values are highlighted in bold if they are significantly better than \textit{all} the other values in the same column, see the caption of Tab.~\ref{tab:results_S} for more details.}
    \label{tab:results_ablations_scale}
  \end{minipage}
  \hfill
  \begin{minipage}[b]{0.35\textwidth}
    \centering
    \includegraphics[scale=0.40]{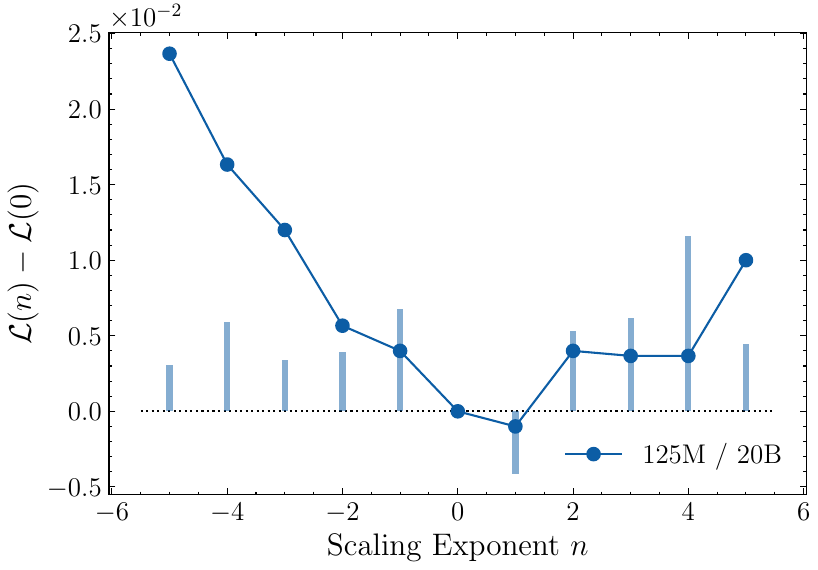}
    \captionof{figure}{Dependency of the loss on the scaling exponent $n$, see Eq.~(\ref{eq:optimizer_update_second_moment_avg_scaled}), for $N=125\M$ and $D=20\B$. The plot shows the difference to the loss obtained for $n = 0$.}
    \label{fig:ablation_scale}
  \end{minipage}
\end{minipage}
\end{figure*}
Results for other dataset sizes and plots for the other evaluation metrics can be found in App.~\ref{app:additional_results_scale}.
Our data shows that the loss reaches a minimum close to $n=0$, with a rather weak dependence on the scaling exponent in its vicinity. Nevertheless, for the smallest and largest scaling exponents studied, we find that the loss gets significantly worse.
Regarding downstream performance, we see indications of a similar pattern, although the statistical uncertainties are too large to draw definite conclusions.  
The semantic usefulness of the embedding vectors as measured by $\rcos$ seems to suffer from a scaling exponent $n < 0$. For the isotropy and the mean embedding, we observe the opposite behavior. They benefit from a smaller scaling exponent $n$ and the associated smaller embedding updates, with the effect being more pronounced the larger the training dataset size $D$.  
However, this also negatively affects the model performance. 
Hence, we conclude that, at least within the range of our experiments, the optimal setting is to have the same learning rate for the embedding parameters as for all the other model parameters, as implied by $n=0$ and Eq.~(\ref{eq:optimizer_update_second_moment_avg}).

\subsection{SGD}
\label{sec:ablation_sgd}

We train several models using SGD with momentum $\gamma = 0.9$ as the optimizer for the embeddings. Since Adam via the inverse square root of its second moment effectively scales the learning rate up by a factor comprising orders of magnitude (see Eq.~(\ref{eq:second_moment_global_factor})), we explicitly multiply the learning rate in SGD by a factor $f$ of comparable size\footnote{Note that the difference between momentum in SGD and the first moment in Adam also plays a role here.}. A hyperparameter search using 
$f \in \{ 100, 200, 300, 400, 500, 600 \}$
is performed to search for the optimum with respect to upstream performance (loss), see App.~\ref{app:additional_results_sgd} for details. It is found at $f = 300$ for $D \in \{ 5\B, 10\B \}$ and $f = 400$ for $D = 20\B$.
The respective optimal model is compared to its counterpart trained with Coupled Adam in Tab.~\ref{tab:results_ablations_sgd_all}.
\begin{table*}[htb]
\centering
\scriptsize
\begin{tabular}{ccc|rrrrrrrr}
\toprule
$D$ & $N$ & Optimizer & $\Loss$ ($\downarrow$) & $\Acc$ ($\uparrow$) & $\ISO$ ($\uparrow$) & $\munorm$ ($\downarrow$) & $\munormrel$ ($\downarrow$) & $\rcos$ ($\uparrow$) & $\rho$ ($\uparrow$) & $\kappa$ ($\uparrow$) \\ 
\midrule
\resultsAblationsSGDAll
\bottomrule 
\end{tabular}
\caption{Comparison of models whose embeddings were trained with SGD and Coupled Adam. The SGD models were obtained after hyperparameter search for the learning rate. The associated factor $f$ is specified in parentheses in the Optimizer column. Bold values indicate better results with statistical significance, see the caption of Tab.~\ref{tab:results_S} for more details.}
\label{tab:results_ablations_sgd_all}
\end{table*}
The results show that, although SGD is advantageous with respect to isotropy, the mean embedding shift and the condition number, Coupled Adam consistently achieves better results on all upstream and downstream task metrics, while having one less hyperparameter to fine-tune.

\section{Related Work}
\label{sec:related_work}

\citet{gao2019representationdegenerationproblemtraining}
first described the anisotropy issue, which they referred to as \textit{representation degeneration problem}, and suggested cosine regularization as a mitigation strategy.
Alternative techniques to address the problem have been developed, including adversarial noise \cite{pmlr-v97-wang19f}, spectrum control \cite{Wang2020ImprovingNL} and Laplacian regularization \cite{zhang-etal-2020-revisiting}.  
\citet{bis2021tmic} have shown that the anisotropy of embeddings can for the most part be traced back to a common shift of the embeddings in a dominant direction. They called this phenomenon \textit{common enemy effect}, and provided a semi-quantitative explanation (Eq.~(\ref{eq:chain_rule_e})), which we developed further in the present work by including the optimizer in the analysis.
In \citet{yu-etal-2022-rare}, Adaptive Gradient Gating is proposed, based on the empirical observation that it is the gradients for embeddings of rare tokens that cause anisotropy. Our analysis conforms to this finding and attributes it to a massive up-scaling of the gradients for rare embeddings with Adam, cf.~Fig.~\ref{fig:gradients_example}.
\citet{machina-mercer-2024-anisotropy} have demonstrated that large Pythia models \cite{pmlr-v202-biderman23a} show improved isotropy compared to similar models, and attribute this to the absence of weight tying. This is in accordance with our analysis of the unembedding gradients in conjunction with Adam, Sec.~\ref{sec:theory}.
While all the previously mentioned papers use average cosine similarity \cite{ethayarajh-2019-contextual} or $\ISO$ from Eq.~(\ref{eq:isotropy}) to quantify the geometry of embedding vectors, \citet{rudman-etal-2022-isoscore} deviate from this. 
Their notion of isotropy is based solely on the embeddings' covariance matrix and embodied by the metric IsoScore. 
In particular, IsoScore is mean-agnostic, while $\ISO$ strongly correlates with the mean embedding (see e.g. Tab.~\ref{tab:results_S}).
In a follow-up
paper \citep{rudman2024stableanisotropicregularization}, IsoScore is
used to regularize isotropy, which appears to benefit performance for fine-tuning tasks.
Finally, concurrent to our work, \citet{zhao2024deconstructingmakesgoodoptimizer} have investigated the importance of using the second moment in Adam with regard to performance and stability. They found that simplified variants of Adam that use the same effective learning rate either for the whole embedding matrix (Adalayer) or each embedding vector (Adalayer*) are slightly worse than Adam but better than SGD. Adalayer* is similar to Coupled Adam, but corresponds to the second moment averaged over hidden space instead of vocabulary space.

\section{Conclusions}

Our work addresses the well-known anisotropy problem for LLM embeddings. 
We have advanced the theoretical understanding of the phenomenon by showing that it is a combination of the common enemy effect and the individual second moments in Adam that causes a collective shift of the embedding vectors away from the origin.
To mitigate the problem, we have introduced Coupled Adam, which enforces the same effective learning rate for every embedding vector, and thus suppresses the collective shift of the embeddings.
We have found that Coupled Adam consistently improves embedding-specific metrics across all experiments, while also achieving better downstream and upstream performance for large datasets, as they are typically used in LLM training.
The code to reproduce our results is available at \href{https://github.com/flxst/coupled-adam}{\nolinkurl{github.com/flxst/coupled-adam}}~.

\section{Limitations}

Although our method is generally applicable to all common LLM architectures, as they share the same language modeling head and embeddings, only dense decoders were used in our experiments. 
In addition, only models with up to $N=2.6\B$ parameters have been tested. Our experiments involve pre-training and few-shot downstream evaluation, yet fine-tuning tasks have not been included.
The cosine decay learning rate schedule was applied throughout all experiments (App.~\ref{app:hyperparameters}). Alternatives such as an infinite learning rate schedule are not incorporated in our study.
It would also be interesting to extend our work to optimizers other than SGD and Adam.
Furthermore, as mentioned at the end of Sec.~\ref{sec:results}, we have not explicitly verified that the slight residual shift of the mean embedding, which is observed even for Coupled Adam, is caused by weight tying.
Finally, we have used a straightforward implementation of Coupled Adam, closely following Algorithm~\ref{alg:algorithm_adam}. More sophisticated implementations might lead to increased efficiency and further improvements; we leave it for future work to investigate this.

\section*{Acknowledgements}
Our computational experiments used around 20000 GPU hours. They were partly run on the EuroHPC supercomputers MeluXina and MareNostrum5 in conjunction with the grants EHPC-DEV-2023D10-032 and EHPC-EXT-2023E02-038.

\bibliography{custom}

\appendix
\section{Unigram Probability Distribution}
\label{app:unigram_probability_example}

Fig.~\ref{fig:prior_example_gpt2_openwebtext} shows the unigram probability distribution for the example of the OpenWebText Corpus dataset and the GPT-2 tokenizer.
\begin{figure}[H]
    \centering
    \includegraphics[scale=0.5]{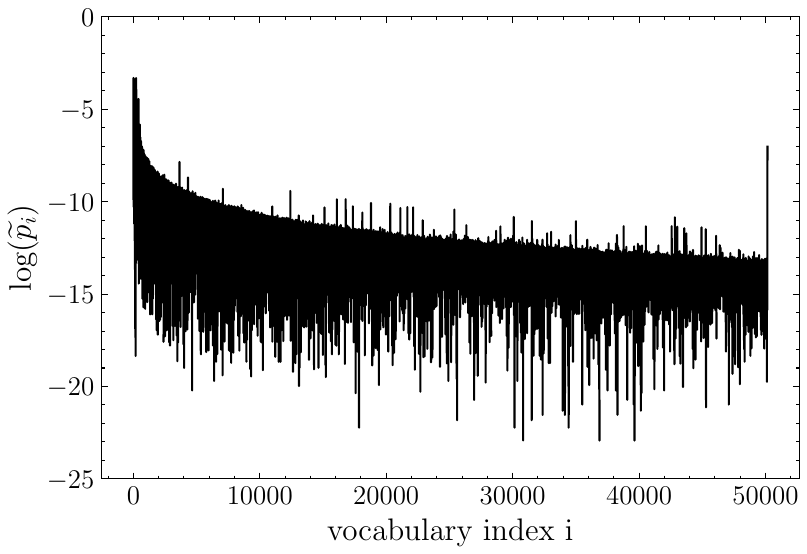}
    \caption{Logarithm $\log(\widetilde{p_i})$ of the unigram probability distribution for the OpenWebText Corpus and the GPT-2 tokenizer. The maximum probability is $\max_{i} \widetilde{p_i} \approx 0.037$ or $\max_{i} \log(\widetilde{p_i}) \approx -3.30$. The minimum probability (not shown) is $\min_{i} \widetilde{p_i} = 0$ or $\min_{i} \log(\widetilde{p_i}) = -\infty$.}
    \label{fig:prior_example_gpt2_openwebtext}
\end{figure}

\section{Embedding Gradients}
\label{app:chain_rule_e}

We explicitly derive Eq.~(\ref{eq:chain_rule_e}), which we recall here for convenience:
\begin{align}
g_i :=~ &\frac{\partial \mathcal{L}}{\partial e_i} 
= - \left( \delta_{it} - p_i \right) \cdot h \tag{\ref{eq:chain_rule_e}}
\end{align}
The chain rule yields
\begin{align}
\frac{\partial \mathcal{L}}{\partial e_i} 
&= \sum_{k=1}^{V} \frac{\partial \mathcal{L}}{\partial p_t} 
\cdot \frac{\partial p_t}{\partial l_k}  
\cdot \frac{\partial l_k}{\partial e_i} \; ,
\label{eq:chain_rule_basis}
\end{align}
where the individual factors can directly be obtained from Eqs.~(\ref{eq:forward_loss})-(\ref{eq:gradient_function_new}):
\begin{align}
\frac{\partial \mathcal{L}}{\partial p_t} &= - \frac{1}{p_t} \label{eq:backward_loss} \\
\frac{\partial p_t}{\partial l_k} 
&= \frac{\delta_{kt} \exp{(l_t)} \cdot \Sigma - \exp{(l_t)} \exp{(l_k)}}{\Sigma^2} \nonumber \\ 
&= \delta_{kt} p_t - p_t p_k \nonumber \\
&= p_t ( \delta_{kt} - p_k ) 
\label{eq:backward_loss_2} \\
\frac{\partial l_k}{\partial e_i} &= \delta_{ki} h \label{eq:backward_e}
\end{align}
Note that in the first line of Eq.~(\ref{eq:backward_loss_2}), we use the abbreviation $\Sigma = \Big( \sum_{j=1}^V \exp{(l_j)} \Big)$.
Inserting Eqs.~(\ref{eq:backward_loss}), (\ref{eq:backward_loss_2}) and (\ref{eq:backward_e}) into Eq.~(\ref{eq:chain_rule_basis}) directly leads to Eq.~(\ref{eq:chain_rule_e}):
\begin{align}
\frac{\partial \mathcal{L}}{\partial e_i} 
&= - \sum_{k=1}^{V} \frac{1}{p_t}
\cdot p_t ( \delta_{kt} - p_k )  
\cdot \delta_{ki} h \nonumber \\
&= - \sum_{k=1}^{V} ( \delta_{kt} - p_k )  
\cdot \delta_{ki} h \nonumber \\
&= - ( \delta_{it} - p_i )  
\cdot h \nonumber
\end{align}

\section{SGD Algorithm}
\label{app:sgd_algorithm}

For completeness and comparison to (Coupled) Adam as displayed in Algorithm~\ref{alg:algorithm_adam}, we summarize the SGD algorithm in Algorithm~\ref{alg:algorithm_sgd}.

\begin{algorithm}[H]
    \small
    \textbf{Input:}
    $\eta$ (lr), $e_i^{(0)}$ (initial embeddings),
    $\mathcal{L}(e_i)$ (objective), $\gamma$ (momentum), $T$ (number of time steps) \\
    \textbf{Output}: $e^{(T)}$ (final embeddings)
    \begin{algorithmic}[1]
        \For{$\tm=1 \dots T$}
            \For{$i=1 \dots V$}
                \State $g_i^{(\tm)}$ $\gets$ $\nabla_{e_i} \mathcal{L}^{(\tm)} (e_i^{(\tm-1)})$
                \If{$t>1$}
                    \State $\mathbf{b}_i^{(\tm)}$ $\gets$ $\gamma \mathbf{b}_i^{(\tm-1)} + g_i^{(\tm)}$
                \Else
                    \State $\mathbf{b}_i^{(\tm)}$ $\gets$ $g_i^{(\tm)}$
                \EndIf
                \State $e_i^{(\tm)}$ $\gets$ $e_i^{(\tm-1)} - \eta \mathbf{b}_i^{(\tm)}$
            \EndFor
        \EndFor
        \vspace*{1.0ex}
        \State \Return $e^{(T)}$
    \end{algorithmic}
    \caption{Pseudocode for the SGD algorithm with optional momentum, applied to the embedding vectors $e_i$.}
    \label{alg:algorithm_sgd}
\end{algorithm}

\section{Magnitude of the Second Moment in Adam}

In this appendix, the validity of 
\begin{equation}
    \E \left[ \secondmomentshort \right] \propto \widetilde p_i
    \tag{\ref{eq:second_moment_linear_in_unigram_prob}}
\end{equation}
is verified.
Due to the linearity of lines 5 and 7 in Algorithm 2, it suffices to show that the squared gradient has the property in question:
\begin{align}
\E \left[ g_i^2 \right] &\propto \widetilde p_i
\label{eq:squared_gradient_linear_in_unigram_prob}
\end{align}
We do this in two different ways.
First, we derive Eq.~(\ref{eq:squared_gradient_linear_in_unigram_prob}) using a semi-theoretical approach with minimal experimental input.
Afterwards, we confirm the relationship in a purely experimental manner. %

\subsection{Semi-theoretical Derivation}
\label{app:second_moment_theory}

Here, we derive an expression for the expectation value of the squared gradient in terms of simple observables (Theorem~\ref{theorem}). Subsequently, the dependency of those observables on $\widetilde p_i$ is determined experimentally. Together, this will yield the proportionality expressed by Eq.~(\ref{eq:squared_gradient_linear_in_unigram_prob}).
We begin our reasoning with a lemma.

\begin{lemma}[Expectation Value Decomposition]\label{lemma}
The expectation value of the squared gradient can be decomposed into conditional expectation values as follows:
\begin{align}
\E \left[ g_i^2 \right] =~ 
&\widetilde p_i \cdot \E \left[ g_i^2 ~\big|~ i=t \right] \nonumber \\
&+ (1 - \widetilde p_i) \cdot \E \left[ g_i^2 ~\big|~ i \neq t \right]
\label{eq:lemma1}
\end{align}
\end{lemma}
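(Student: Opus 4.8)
The plan is to recognize the claimed identity as a direct instance of the law of total expectation, in which the underlying probability space is partitioned according to whether the fixed index $i$ coincides with the true token $t$ of a given training example. The randomness underlying $\E[\cdot]$ is over the draw of training examples, each of which supplies a true token $t \in \V$ together with a hidden state $h$; by Eq.~(\ref{eq:chain_rule_e}) the squared gradient $g_i^2$ is then a (random) function of that example, so it is exactly the kind of quantity to which conditioning applies.

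First I would fix the index $i$ and introduce the two complementary events $\{i = t\}$ and $\{i \neq t\}$, which together form a partition of the sample space. The essential quantitative input is that the true token $t$ is marginally distributed according to the unigram probability distribution of Eq.~(\ref{eq:unigram_probability}), so that $\Pr(i = t) = \widetilde p_i$ and hence $\Pr(i \neq t) = 1 - \widetilde p_i$. I would justify this by observing that $\widetilde p_i$ is precisely the empirical frequency with which token $i$ occurs as the target across the tokenized dataset, which is the law governing $t$ in the expectation. With the partition and its two probabilities in hand, the conclusion is a one-line substitution:
\[
\E\left[ g_i^2 \right] = \Pr(i=t)\,\E\left[ g_i^2 ~\big|~ i=t \right] + \Pr(i \neq t)\,\E\left[ g_i^2 ~\big|~ i \neq t \right],
\]
which, upon inserting $\Pr(i=t)=\widetilde p_i$ and $\Pr(i\neq t)=1-\widetilde p_i$, yields Eq.~(\ref{eq:lemma1}) immediately.

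The main obstacle here is conceptual rather than computational: one must be explicit about the source of randomness and argue carefully that the marginal law of the true token equals the unigram distribution $\widetilde p$. No structure of $g_i$ beyond measurability is required, and once the identification $\Pr(i=t)=\widetilde p_i$ is pinned down the remainder is routine. I would therefore devote most of the exposition to fixing the probabilistic setup cleanly, since this is both the subtle point and the bridge that makes the subsequent proportionality $\E[g_i^2] \propto \widetilde p_i$ (Eq.~(\ref{eq:squared_gradient_linear_in_unigram_prob})) tractable once the two conditional expectations are analyzed.
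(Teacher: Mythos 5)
Your proposal is correct and matches the paper's argument: the paper proves the same identity by writing $\E[g_i^2]$ as an integral, splitting the density via the law of total probability into $p(g_i\,|\,i=t)\,\widetilde p_i + p(g_i\,|\,i\neq t)\,(1-\widetilde p_i)$, and integrating back, which is exactly the law of total expectation you invoke directly, with the same key identification $\Pr(i=t)=\widetilde p_i$. Your added care about specifying the underlying probability space (randomness over training examples) is a reasonable elaboration of a point the paper leaves implicit, not a different route.
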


\begin{proof}
Our starting point is the definition of the expectation value for the continuous random variable $g_i^2$:
\begin{align}
\E \left[ g_i^2 \right] = \int g_i^2 ~p(g_i) ~dg_i \; ,
\label{eq:aux_E_definition}
\end{align}
where $p$ denotes the probability distribution of $g_i$. Since the vocabulary item $i$ can only be either the true token $t$ or not, we can decompose $p$ into a sum of joint probability distributions (using the {\em law of total probabilities}), each of which can be expressed in terms of conditional probabilities like so:
\begin{align}
p(g_i) 
&= p(g_i, i=t) + p(g_i, i\neq t) \nonumber \\
&= p(g_i ~|~ i=t) \cdot p(i=t) \nonumber \\
&\quad + p(g_i ~|~ i\neq t) \cdot p(i\neq t)
\end{align}
Using the unigram probability $\widetilde p_i = p(i = t)$, this can also be written as
\begin{align}
p(g_i) 
&= \widetilde p_i \cdot p(g_i ~|~ i=t) \nonumber \\
&\quad + (1 - \widetilde p_i) \cdot p(g_i ~|~ i\neq t)
\label{eq:aux_p_decomposition}
\end{align}
If we insert Eq.~(\ref{eq:aux_p_decomposition}) back into Eq.~(\ref{eq:aux_E_definition}), the expectation value becomes
\begin{align}
\E \left[ g_i^2 \right] 
&= \widetilde p_i \cdot \int g_i^2 ~p(g_i ~|~ i=t) ~dg_i \nonumber \\
&\quad + (1 - \widetilde p_i) \cdot \int g_i^2 ~p(g_i ~|~ i\neq t) ~dg_i \;, 
\label{eq:aux_E_decomposition} %
\end{align}
which by definition of the (conditional) expectation value, Eq.~(\ref{eq:aux_E_definition}), is equivalent to Eq.~(\ref{eq:lemma1}).
\end{proof}
\begin{theorem}[Expectation Value Squared Gradient] \label{theorem}
Given that the squared hidden state vector $h^2$ is independent of $p_i$ and whether $i$ is the true token or not, the expectation value of the squared gradient $g_i^2$ is given by
\begin{align}
\E \left[ g_i^2 \right] 
&= S \cdot \left[ \widetilde p_i \cdot X_i^{(i=t)} + (1 - \widetilde p_i) \cdot X_i^{(i \neq t)} \right] \; ,
\label{eq:theorem}
\end{align}
with
\begin{align}
S &:= \E \left[ h^2 \right] \label{eq:optimizer_second_moment_expansion_S} \\
X_i^{(i=t)} &:= \E \left[ (1 - p_i)^2 ~\big|~ i=t \right]
\label{eq:optimizer_second_moment_expansion_true_2} \\
X_i^{(i \neq t)} &:= \E \left[ p_i^2 ~\big|~ i \neq t \right]
\label{eq:optimizer_second_moment_expansion_false_2}
\end{align}
\end{theorem}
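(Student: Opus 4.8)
The plan is to build directly on Lemma~\ref{lemma}, which already expresses $\E[g_i^2]$ as the $\widetilde p_i$-weighted combination $\widetilde p_i \cdot \E[g_i^2 \mid i=t] + (1-\widetilde p_i)\cdot\E[g_i^2 \mid i\neq t]$, i.e.\ Eq.~(\ref{eq:lemma1}). The entire task then reduces to evaluating these two conditional expectations and recognizing them as $S\cdot X_i^{(i=t)}$ and $S\cdot X_i^{(i\neq t)}$; substituting back and factoring out $S$ gives Eq.~(\ref{eq:theorem}) immediately.

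First I would substitute the explicit gradient from Eq.~(\ref{eq:chain_rule_e}). Squaring component-wise, as required by the Adam second moment in line~\ref{alg:line:adam_second_moment_definition} of Algorithm~\ref{alg:algorithm_adam}, yields $g_i^2 = (\delta_{it} - p_i)^2\, h^2$, where $h^2$ is the element-wise square of the hidden state. Then I would resolve the Kronecker delta separately on each conditioning event: on $i=t$ we have $\delta_{it}=1$, so $g_i^2 = (1-p_i)^2\, h^2$; on $i\neq t$ we have $\delta_{it}=0$, so $g_i^2 = p_i^2\, h^2$. This collapses the two conditional expectations into $\E[(1-p_i)^2\, h^2 \mid i=t]$ and $\E[p_i^2\, h^2 \mid i\neq t]$.

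Next I would invoke the independence hypothesis of the theorem, namely that $h^2$ is independent of $p_i$ and of whether $i$ is the true token. This permits factoring the expectation of each product into a product of expectations and, crucially, forces $\E[h^2 \mid i=t] = \E[h^2 \mid i\neq t] = \E[h^2] =: S$, matching Eq.~(\ref{eq:optimizer_second_moment_expansion_S}). Hence $\E[g_i^2 \mid i=t] = S\cdot\E[(1-p_i)^2 \mid i=t] = S\cdot X_i^{(i=t)}$ and $\E[g_i^2 \mid i\neq t] = S\cdot X_i^{(i\neq t)}$, using the definitions in Eqs.~(\ref{eq:optimizer_second_moment_expansion_true_2}) and (\ref{eq:optimizer_second_moment_expansion_false_2}). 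Inserting these into Eq.~(\ref{eq:lemma1}) and pulling out the common factor $S$ produces Eq.~(\ref{eq:theorem}).

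I expect the only genuinely non-mechanical step to be the factorization: one must verify that independence of $h^2$ from both $p_i$ and the indicator of the event $i=t$ suffices to split $\E[(\,\cdot\,)^2\, h^2 \mid \cdot\,]$ into $\E[(\,\cdot\,)^2 \mid \cdot\,]\cdot\E[h^2]$ and to drop the conditioning on $\E[h^2]$. This is precisely where the theorem's hypothesis is consumed, and it is also the place where the (strong) modeling assumption should be flagged. Everything else is substitution of Eq.~(\ref{eq:chain_rule_e}) and routine bookkeeping of the Kronecker delta across the two cases.
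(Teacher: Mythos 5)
Your proposal is correct and follows essentially the same route as the paper's own proof: substituting Eq.~(\ref{eq:chain_rule_e}) into Lemma~\ref{lemma}, resolving the Kronecker delta under each conditioning event, and using the independence assumption on $h^2$ to factor out $S = \E\left[h^2\right]$. If anything, you are slightly more explicit than the paper at the key step, spelling out that independence is what lets one replace $\E\left[h^2 \mid i=t\right]$ and $\E\left[h^2 \mid i\neq t\right]$ by the unconditional $\E\left[h^2\right]$.
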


\begin{proof}
We start from Lemma~\ref{lemma} and the square of the gradient,
\begin{align}
g_i^2 
&\stackrel{(\ref{eq:chain_rule_e})}{=} \left( \delta_{it} - p_i \right)^2 h^2 
\label{eq:optimizer_second_moment}
\end{align}
Note that squared variables of vectors in $\mathbb{R}^H$ always denote the elementwise (Hadamard) product, e.g.
\begin{align}
g_i^2 &\equiv g_i \odot g_i \in \mathbb{R}_{\geq 0}^H \; ,
\label{eq:hadamard_product}
\end{align}
with strictly non-negative elements.
Using Eq.~(\ref{eq:optimizer_second_moment}), the expectation values on the right side of Eq.~(\ref{eq:lemma1}) can be expressed as
\begin{align}
\E \left[ g_i^2 ~\big|~ i=t \right]
&= \E \left[ \left( 1 - p_i \right)^2 \cdot h^2 ~\big|~ i=t \right] \\
\E \left[ g_i^2 ~\big|~ i\neq t \right]
&= \E \left[ p_i^2 \cdot h^2 ~\big|~ i\neq t \right]
\end{align}
Given our assumptions regarding $h^2$, its expectation value can be factored out:
\begin{align}
\E \left[ g_i^2 ~\big|~ i=t \right]
&= S \cdot X_i^{(i=t)} \label{eq:Etrue} \\
\E \left[ g_i^2 ~\big|~ i\neq t \right]
&= S \cdot X_i^{(i \neq t)} \label{eq:Efalse} 
\end{align}
Inserting Eqs.~(\ref{eq:Etrue}) and (\ref{eq:Efalse}) into Eq.~(\ref{eq:lemma1}) yields Eq.~(\ref{eq:theorem}).
\end{proof}

Note that Eq.~(\ref{eq:theorem}) is a vector equation, with $\E \left[ g_i^2 \right], S \in \mathbb{R}_{\geq 0}^H$ and $\widetilde p_i, X_i^{(i=t)}, X_i^{(i \neq t)} \in \mathbb{R}_{\geq 0}$.
It states that the expectation value of $g_i^2$ factorizes into a global constant $S$ that is $i$-independent, and a factor that is $i$-dependent. The latter is a specific combination of the unigram probability $\widetilde p_i$, determined by the data, and the conditional expectation values $X_i^{(i=t)}$ and $X_i^{(i\neq t)}$, determined by the model.

\paragraph{Experimental Input}

Regarding the unigram probability, we know that
\begin{enumerate}
\item $\widetilde p_i \ll 1$. \\
This is the case for virtually all natural language datasets with a common vocabulary size of $V > 10000$, according to Zipf's law.
\end{enumerate}
The conditional expectation values $X_i^{(i=t)}$ and $X_i^{(i\neq t)}$ can be empirically estimated by applying training data to different checkpoints. We consider the three small-scale experiments of Sec.~\ref{sec:experiments_S} with $N \in \{ 125\M, 355\M, 760\M \}$ and $D=20\B$, and take ten equidistant checkpoints after $D^\prime \in \{ 2\B, 4\B, \ldots, 20\B\}$ seen tokens for each of them. We then continue pseudo-training on 20 batches ($\approx$ 2k samples or 2M tokens, see Tab.~\ref{tab:model_architecture}) of data using a zero learning rate, and measure the conditional probabilities in Eqs.~(\ref{eq:optimizer_second_moment_expansion_true_2}, \ref{eq:optimizer_second_moment_expansion_false_2}) from which our target quantities can be estimated.  
Subsequently, linear fits of the form 
\begin{align}
    X_i^{(i = t)} &= A^{(i = t)} \cdot \widetilde p_i \\
    X_i^{(i \neq t)} &= A^{(i \neq t)} \cdot \widetilde p_i \; ,
\end{align}
with fit parameters $A^{(i = t)}$ and $A^{(i \neq t)}$ are performed. $R^2$ is used to assess the quality of the fits. In addition, the mutual information $\I$ between the response and the explanatory variable is computed. 
Since we observe only a very weak dependence of the results for $R^2$ and $\I$ on $N$ and $D^\prime$, we specify the mean and standard deviation over all experiments for them.
Our findings are:
\begin{enumerate}
\item[2.] $X_i^{(i = t)}$ is independent of $\widetilde p_i$. \\
The linear fits yield $R^2 = 0.003(1)$, and the mutual information is $\I \left(X_i^{(i = t)}; \widetilde p_i \right) = 0.14(2)$.
\item[3.] $X_i^{(i \neq t)}$ is proportional to $\widetilde p_i$. \\
The linear fits yield $R^2 = 0.92(1)$, and the mutual information is $\I \left( X_i^{(i \neq t)}; \widetilde p_i \right) = 0.50(2)$.
\end{enumerate}

The three empirical results above, together with Theorem~\ref{theorem}, immediately lead to Eq.~(\ref{eq:squared_gradient_linear_in_unigram_prob}).

\subsection{Experimental Confirmation}
\label{app:second_moment_empirical}

We reuse the experiments from the previous section to measure the second moment $\secondmomentshort$ directly, in order to estimate $\E \left[ \secondmomentshort \right]$. Again, linear fits of the form
\begin{align}
    \E \left[ \secondmomentshort \right] = A \cdot \widetilde p_i
\end{align}
are performed and the mutual information is computed. 
We find
\begin{enumerate}
\item[4.] $\E \left[ \secondmomentshort \right]$ is proportional to $\widetilde p_i$. \\
The linear fits yield $R^2 = 0.85(7)$, and the mutual information is $\I \left( \E \left[ \secondmomentshort \right]; \widetilde p_i \right) = 1.18(9)$.
\end{enumerate}
The results for $N=125\M$ and $D=D^\prime=20\B$ are depicted in Fig.~\ref{fig:experimental_results_E_p}, as an example.
\begin{figure}[h!]
    \centering
    \includegraphics[scale=0.5]{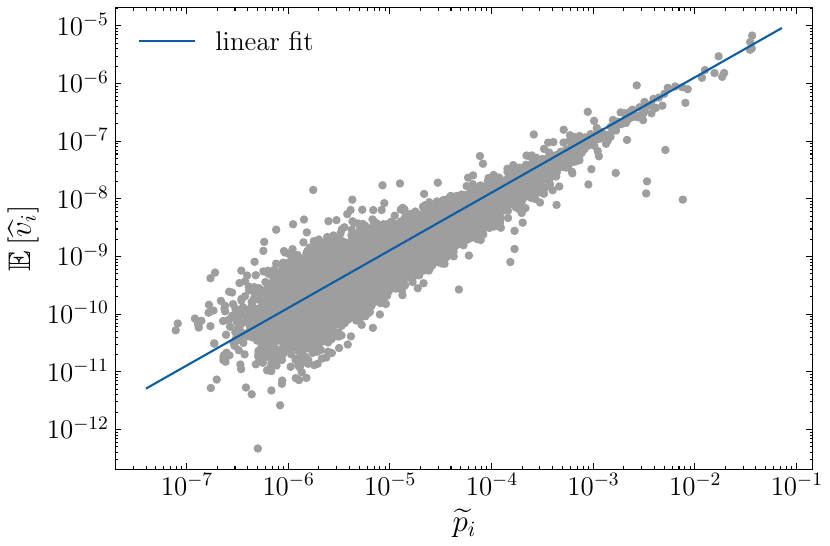}
    \caption{Experimental results for $\E \left[ \secondmomentshort \right]$ (vertical axis) vs. $\widetilde p_i$ (horizontal axis) for $N=125\M$ and $D=D^\prime=20\B$. The blue line shows the linear fit with $R^2 = 0.91$.}
    \label{fig:experimental_results_E_p}
\end{figure}

Note that while $R^2$ and $\I$ are again virtually independent of $N$ and $D^\prime$, the fit parameter $A$ is not. Instead, it seems to increase with $D^\prime$, as shown in Fig.~\ref{fig:experimental_results_A}.
\begin{figure}[h!]
    \centering
    \includegraphics[scale=0.5]{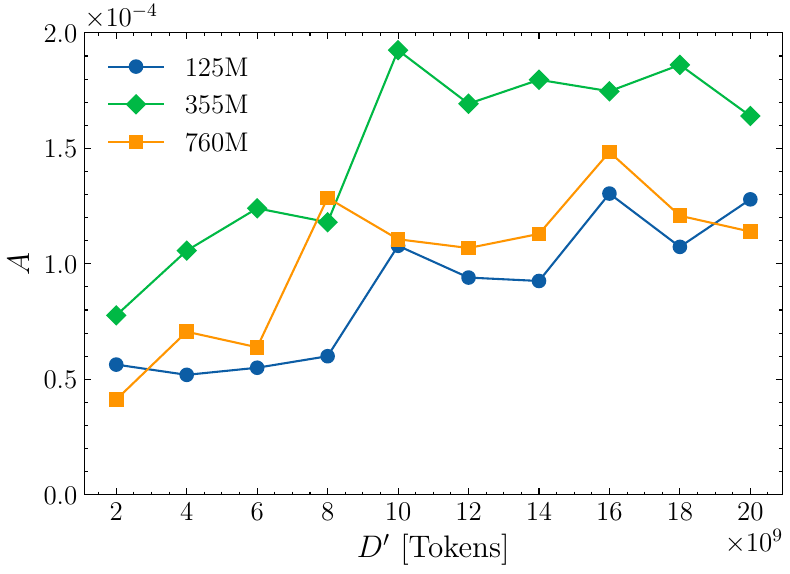}
    \caption{Experimental results for the linear fit parameter $A$ as a function of $N$ and $D^\prime$.}
    \label{fig:experimental_results_A}
\end{figure}
However, as stated in Eq.~(\ref{eq:second_moment_proportionality_constant}), the order of magnitude is $A \approx 10^{-4}$ throughout our experiments.

\section{Experimental Details}
\subsection{Model and Dataset Sizes}
\label{app:experiments_overview}

The model sizes $N$ and dataset sizes $D$ employed in our experiments are depicted in Fig.~\ref{fig:experiments}.
\begin{figure}[ht]
    \centering
    \includegraphics[scale=0.5]{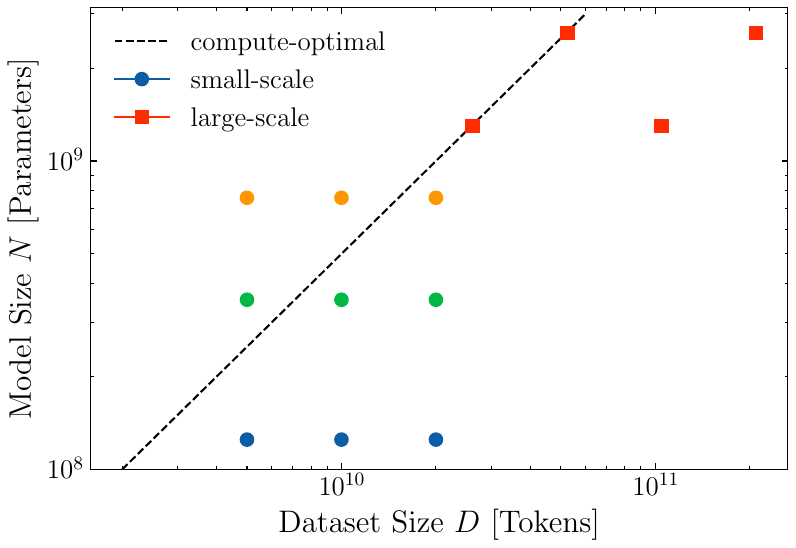}
    \caption{Overview of the dataset (horizontal axis) and model sizes (vertical axis) involved in our small-scale (blue, green and orange circles) and large-scale (red squares) experiments. The dashed, black line shows $N = D / 20$, which is approximately the compute-optimal trajectory according to \citet{hoffmann2022trainingcomputeoptimallargelanguage}.}
    \label{fig:experiments}
\end{figure}

\subsection{Training Hyperparameters}
\label{app:hyperparameters}

In Tab.~\ref{tab:model_architecture}, we list the general hyperparameters used in our small-scale (Sec.~\ref{sec:experiments_S}) and large-scale (Sec.~\ref{sec:experiments_L}) experiments. 
\begin{table}[ht]
\centering
\scriptsize
\begin{tabular}{l|cc}
\toprule
Description & Small-scale & Large-scale \\ 
\midrule
optimizer & \multicolumn{2}{c}{AdamW} \\ 
$\beta_1$ & \multicolumn{2}{c}{0.9} \\
$\beta_2$ & \multicolumn{2}{c}{0.95} \\
$\epsilon$ & \multicolumn{2}{c}{1e-8} \\
weight decay & \multicolumn{2}{c}{0.1} \\
gradient clipping & \multicolumn{2}{c}{1.0} \\
dropout & \multicolumn{2}{c}{0.0} \\
weight tying & \multicolumn{2}{c}{true} \\
vocab size & \multicolumn{2}{c}{50304} \\
learning rate schedule & \multicolumn{2}{c}{cosine decay} \\
layer normalization & \multicolumn{2}{c}{LayerNorm} \\
precision & \multicolumn{2}{c}{BF16} \\
\midrule
hidden activation & GeLU & SwiGLU \\
positional embedding & absolute (learned) & RoPE \\ 
sequence length & 1024 & 2048 \\
batch size (samples) & 96 & 256 \\
batch size (tokens) & $\sim$100k & $\sim$500k \\
warmup & 100 steps & $1\%$ of steps \\
training framework & nanoGPT & Modalities \\
training parallelism & DDP & FSDP \\
\bottomrule 
\end{tabular}
\caption{General hyperparameters used in our two sets of experiments.}
\label{tab:model_architecture}
\end{table}
During warm-up, the learning rate is increased from zero to the maximum learning rate. This is followed by a cosine decay which reduces the learning rate to $10\%$ of the maximum at the end of training. Note that weight decay is applied only to linear layers, not layer norms or embeddings.
Tab.~\ref{tab:model_sizes} shows the hyperparameters related to model size, following GPT-3 \cite{brown2020languagemodelsfewshotlearners}.
\begin{table}[ht]
\centering
\scriptsize
\begin{tabular}{c|cccc}
\toprule
$N$ & lr & heads & layers & emb. dim. \\ 
\midrule
125M & 6.0e-4 & 12 & 12 & 768 \\
355M & 3.0e-4 & 16 & 24 & 1024 \\
760M & 2.5e-4 & 16 & 24 & 1536 \\
1.3B & 2.0e-4 & 32 & 24 & 2048 \\
2.6B & 1.6e-4 & 32 & 32 & 2560 \\
\bottomrule 
\end{tabular}
\caption{Model-size dependent hyperparameters used in our experiments. $N$ denotes the model size in terms of parameters, while lr corresponds to the maximum learning rate.}
\label{tab:model_sizes}
\end{table}

\section{Error Analysis and Statistical Significance}
\label{app:error}

For the error analysis, two separate random variables, $X_0$ and $X_1$, are considered. The symbol $X$ represents one of the metrics discussed in Sec.~\ref{sec:experiments_evaluation}, while $0$ and $1$ stand for two approaches that are to be compared, like standard Adam and Coupled Adam, for instance.
For each of the two random variables $i = \{ 0, 1 \}$, we conduct and evaluate $S$ training runs with different seeds, yielding results 
\begin{align}
\{ X_i^{(1)}, \ldots, X_i^{(S)} \}
\end{align}
While it is desirable to have a large sample size $S$, it is  prohibitively expensive for large model and dataset sizes to repeat training runs. We use
\begin{align}
S &= 3
\label{eq:error_S3}
\end{align}
except for the large-scale experiments (Sec.~\ref{sec:experiments_L}), where we restrict ourselves to
\begin{align}
S &= 1
\label{eq:error_S1}
\end{align}
We are interested in the difference 
\begin{align}
d = X_1 - X_0
\label{eq:error_d}
\end{align}
For $S=1$, it can be computed straight forwardly. However, no statement about the statistical uncertainty or significance of $d$ can be made.
In the case of $S = 3$, we apply a one-sided Student's t-test with a confidence level of 
\begin{align}
\alpha = 95\%
\label{eq:error_confidence_level_alpha}
\end{align}
First, the sample means
\begin{align}
\bar X_i &= \frac{1}{S} \sum_{s=1}^S X_i^{(s)}
\label{eq:student_mean_i}
\end{align}
and the corrected sample standard deviations
\begin{align}
\hat \sigma_i^2 &= \frac{1}{S-1} \sum_{s=1}^S \left( X_i^{(s)} - \bar X_i \right)^2
\label{eq:student_std_i}
\end{align}
for the two samples $i \in \{0, 1\}$ are estimated.
The sample means from Eq.~(\ref{eq:student_mean_i}) are combined to an estimate for their difference,
\begin{align}
\bar d &= \bar X_1 - \bar X_0
\label{eq:student_mean_d}
\end{align}
and the sample standard deviations from Eq.~(\ref{eq:student_std_i}) are propagated to the sample standard deviation of $d$ via Gaussian error propagation:
\begin{align}
\hat \sigma_d &= 
\sqrt{\left( \frac{\partial d}{\partial X_0} \cdot \hat \sigma_0 \right)^2 + \left( \frac{\partial d}{\partial X_1} \cdot \hat \sigma_1 \right)^2} \nonumber \\
&\stackrel{(\ref{eq:error_d})}{=} \sqrt{\hat \sigma_0^2 + \hat \sigma_1^2}
\label{eq:sigmad}
\end{align}

Student's t-distribution for the chosen confidence level $\alpha$ (see Eq.~(\ref{eq:error_confidence_level_alpha})) and the
\begin{align}
\nu &= S - 1 \stackrel{(\ref{eq:error_S3})}{=} 2 
\end{align}
degrees of freedom yields
\begin{align}
t_{\alpha, \nu} = 2.92
\label{eq:talphanu}
\end{align}
With $S$, $\sigma_d$ and $t_{\alpha, \nu}$ from Eqs.~(\ref{eq:error_S3}), (\ref{eq:sigmad}) and (\ref{eq:talphanu}) as ingredients, the one-sided confidence threshold for the difference can be computed as
\begin{align}
d_{\rm significance}
&= t_{\alpha, \nu} \cdot \frac{\hat \sigma_d}{\sqrt{S}}
\label{eq:studentonesidedconfidencethreshold}
\end{align}
Hence, the estimate $\bar d$ from Eq.~(\ref{eq:student_mean_d}) is considered a statistically significant improvement of approach $i=1$ over approach $i=0$ if
\begin{align}
\bar d < - d_{\rm significance}
\label{eq:student_signficance_loss}
\end{align}
for metrics where smaller values are desirable (e.g. $\Loss$), and 
\begin{align}
\bar d > d_{\rm significance}
\label{eq:student_signficance_other}
\end{align}
for metrics where larger values are better (e.g. $\Acc$).

\CatchFileDef{\resultsAblationsSGDExpFive}{tables/results_ablations_sgd_only_exp12.tex}{}
\CatchFileDef{\resultsAblationsSGDExpTen}{tables/results_ablations_sgd_only_exp13.tex}{}
\CatchFileDef{\resultsAblationsSGDExpTwenty}{tables/results_ablations_sgd_only_exp15.tex}{}
\CatchFileDef{\resultsSdownstream}{tables/results_S_downstream.tex}{}
\CatchFileDef{\resultsLdownstream}{tables/results_L_downstream.tex}{}
\CatchFileDef{\resultsAblationsScaleSmalldownstream}{tables/results_ablations_scale_small_downstream.tex}{}
\CatchFileDef{\resultsAblationsSGDAlldownstream}{tables/results_ablations_sgd_all_downstream.tex}{}

\section{Additional Results}

\subsection{Scaled Coupled Adam}
\label{app:additional_results_scale}

Tab.~\ref{tab:results_ablations_scale} of 
Sec.~\ref{sec:ablation_different_learning_rate} shows the results of varying the scaling exponent $n$ (see Eq.~(\ref{eq:optimizer_update_second_moment_avg_scaled})) for $D = 20\B$. The dependency of the loss is visualized in Fig.~\ref{fig:ablation_scale}.
Here, in Fig.~\ref{fig:ablation_scale_complete}, we extend the visualization of the results to $D \in \{ 5\B, 10\B, 20\B \}$ and the other evaluation metrics.
\begin{figure*}[ht]
    \centering
    \includegraphics[scale=0.5]{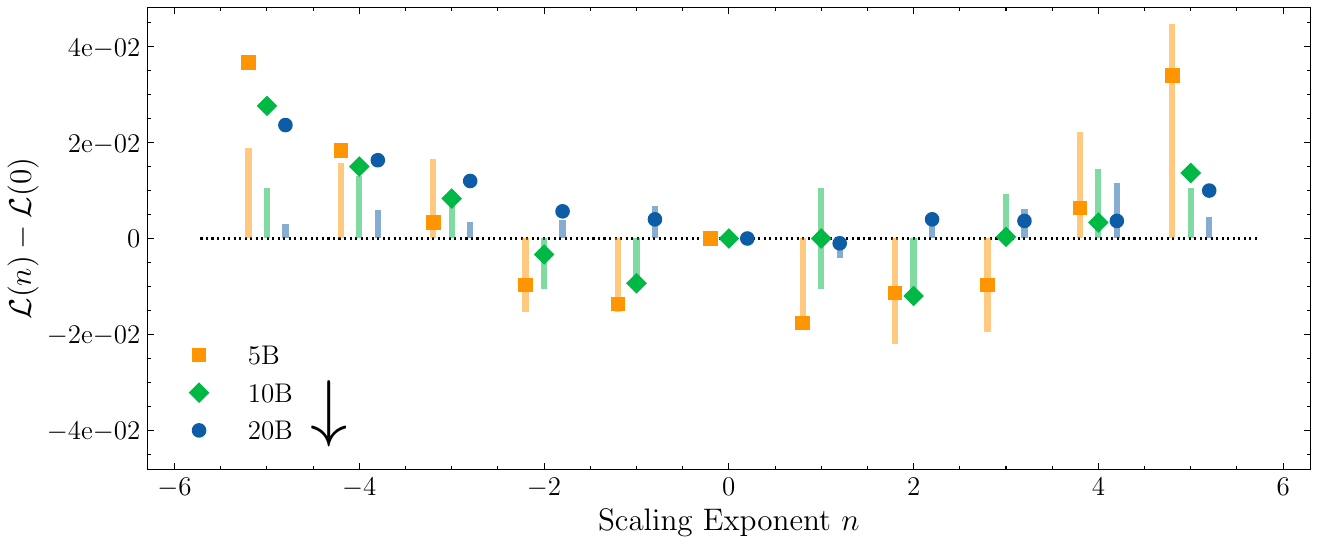}
    \includegraphics[scale=0.5]{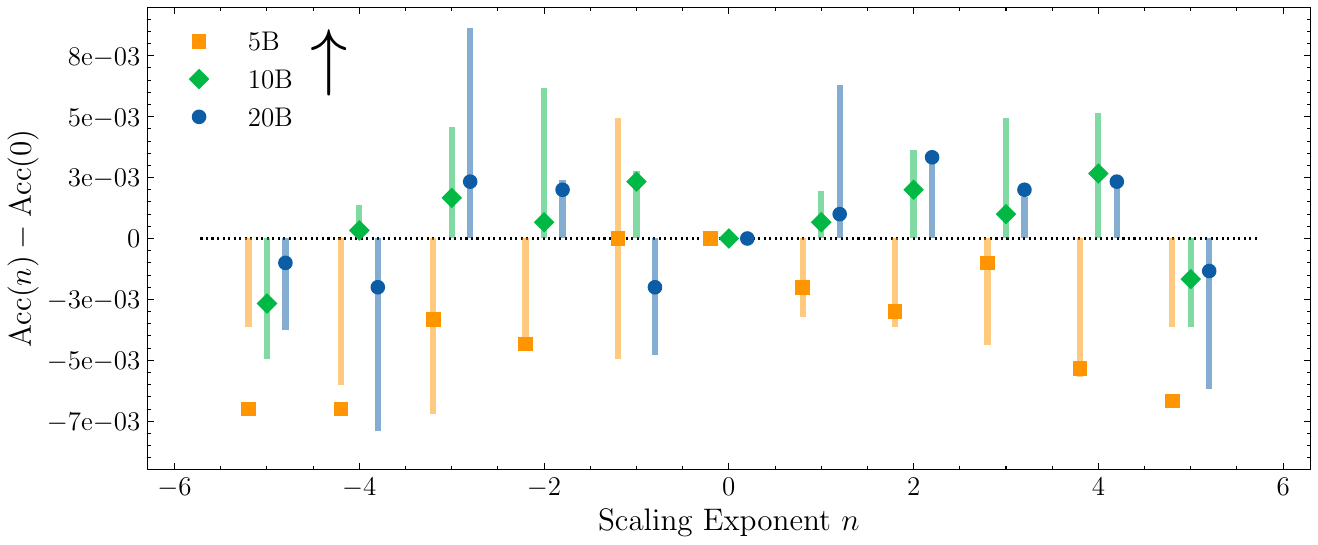}
    \includegraphics[scale=0.5]{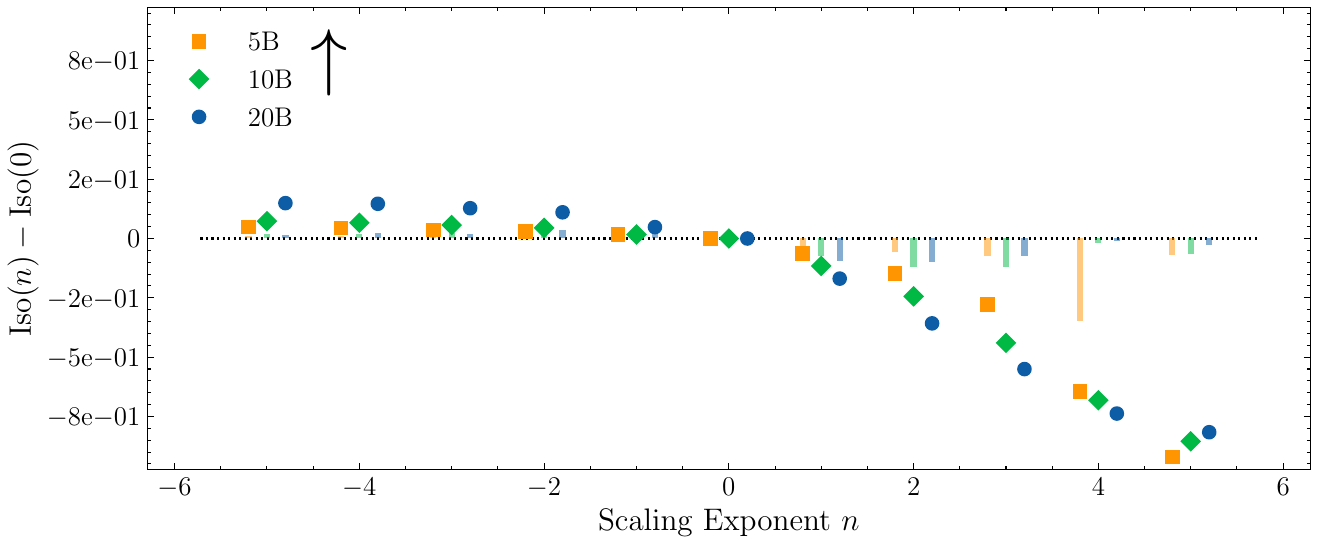}
    \includegraphics[scale=0.5]{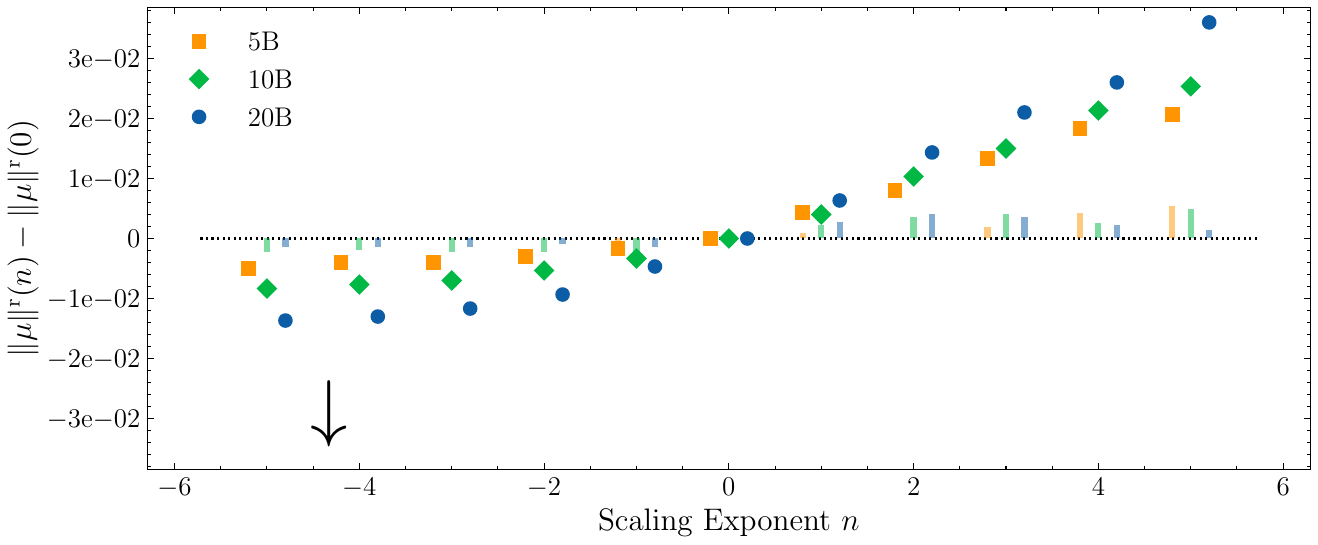}
    \includegraphics[scale=0.5]{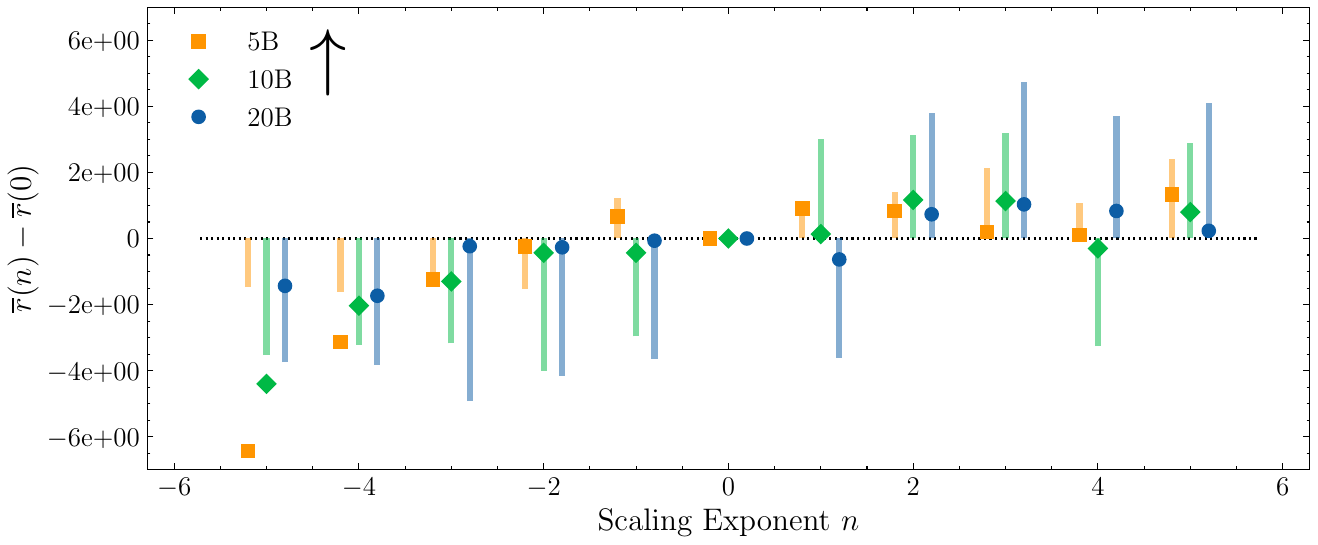}
    \caption{Dependency of different metrics on the scaling exponent $n$, see Eq.~(\ref{eq:optimizer_update_second_moment_avg_scaled}). From top to bottom: loss (upstream performance), average accuracy (downstream performance), isotropy, mean embedding norm ratio and $\rcos$. Each plot shows the difference to the respective metric obtained for $n = 0$. The arrows indicate whether larger ($\uparrow$) or smaller ($\downarrow$) values are desirable.}
    \label{fig:ablation_scale_complete}
\end{figure*}

\subsection{SGD}
\label{app:additional_results_sgd}

In Tab.~\ref{tab:results_ablations_sgd_all} of Sec.~\ref{sec:ablation_sgd}, we showed results for SGD using the best hyperparameter $f$. 
Detailed results of the corresponding hyperparameter searches can be found in Tab.~\ref{tab:results_ablations_sgd}.
\begin{table*}[ht]
\centering
\scriptsize
\begin{tabular}{ccc|rrrrrrrr}
\toprule
$D$ & $N$ & Optimizer & $\Loss$ ($\downarrow$) & $\Acc$ ($\uparrow$) & $\ISO$ ($\uparrow$) & $\munorm$ ($\downarrow$) & $\munormrel$ ($\downarrow$) & $\rcos$ ($\uparrow$) & $\rho$ ($\uparrow$) & $\kappa$ ($\uparrow$) \\ 
\midrule
\resultsAblationsSGDExpFive
\bottomrule 
\\
\\
\end{tabular}
\begin{tabular}{ccc|rrrrrrrr}
\toprule
$D$ & $N$ & Optimizer & $\Loss$ ($\downarrow$) & $\Acc$ ($\uparrow$) & $\ISO$ ($\uparrow$) & $\munorm$ ($\downarrow$) & $\munormrel$ ($\downarrow$) & $\rcos$ ($\uparrow$) & $\rho$ ($\uparrow$) & $\kappa$ ($\uparrow$) \\ 
\midrule
\resultsAblationsSGDExpTen
\bottomrule 
\\
\\
\end{tabular}
\begin{tabular}{ccc|rrrrrrrr}
\toprule
$D$ & $N$ & Optimizer & $\Loss$ ($\downarrow$) & $\Acc$ ($\uparrow$) & $\ISO$ ($\uparrow$) & $\munorm$ ($\downarrow$) & $\munormrel$ ($\downarrow$) & $\rcos$ ($\uparrow$) & $\rho$ ($\uparrow$) & $\kappa$ ($\uparrow$) \\ 
\midrule
\resultsAblationsSGDExpTwenty
\bottomrule 
\end{tabular}
\caption{Results of our experiments with SGD. Values are highlighted in bold if they are significantly better than all the other values in the same column.}
\label{tab:results_ablations_sgd}
\end{table*}

\subsection{Individual Downstream Task Performance}
\label{app:additional_results_downstream}

In Tab.~\ref{tab:results_S_downstream}-\ref{tab:results_ablations_sgd_all_downstream}, we list the individual downstream task performance for all our experiments (Sec.~\ref{sec:experiments}-\ref{sec:ablations}).

\clearpage

\begin{table*}
\centering
\scriptsize
\begin{tabular}{ccc|rrrrrrr|r}
\toprule
$D$ & $N$ & Adam & ARC easy & ARC challenge & HellaSwag & LAMBADA & RACE & TruthfulQA & WinoGrande & $\Acc$ ($\uparrow$) \\ 
\midrule
\resultsSdownstream
\bottomrule 
\end{tabular}
\caption{Detailed downstream task results of our small-scale experiments from Sec.~\ref{sec:experiments_S} and \ref{sec:results_S}. Compare to Tab.~\ref{tab:results_S}.}
\label{tab:results_S_downstream}
\end{table*}

\begin{table*}
\centering
\scriptsize
\begin{tabular}{lll|rrrrrrr|r}
\toprule
$D$ & $N$ & Adam & ARC easy & ARC challenge & HellaSwag & LAMBADA & RACE & TruthfulQA & WinoGrande & $\Acc$ ($\uparrow$) \\
\midrule
\resultsLdownstream
\bottomrule 
\end{tabular}
\caption{Detailed downstream task results of our large-scale experiments from Sec.~\ref{sec:experiments_L} and \ref{sec:results_L}. Compare to Tab.~\ref{tab:results_L}.}
\label{tab:results_L_downstream}
\end{table*}

\begin{table*}
\centering
\scriptsize
\begin{tabular}{c|rrrrrrr|r}
\toprule
$n$ & ARC easy & ARC challenge & HellaSwag & LAMBADA & RACE & TruthfulQA & WinoGrande & $\Acc$ ($\uparrow$) \\
\midrule
\resultsAblationsScaleSmalldownstream
\bottomrule 
\end{tabular}
\caption{Detailed downstream task results of our ablations on Scaled Coupled Adam from Sec.~\ref{sec:ablation_different_learning_rate}. Compare to Tab.~\ref{tab:results_ablations_scale}.}.
\label{tab:results_ablations_scale_downstream}
\end{table*}

\begin{table*}
\centering
\scriptsize
\begin{tabular}{ccc|rrrrrrr|r}
\toprule
$D$ & $N$ & Optimizer & ARC easy & ARC challenge & HellaSwag & LAMBADA & RACE & TruthfulQA & WinoGrande & $\Acc$ ($\uparrow$) \\ 
\midrule
\resultsAblationsSGDAlldownstream
\bottomrule 
\end{tabular}
\caption{Detailed downstream task results of our ablations on SGD from Sec.~\ref{sec:ablation_sgd}. Compare to Tab.~\ref{tab:results_ablations_sgd_all}.}
\label{tab:results_ablations_sgd_all_downstream}
\end{table*}

\end{document}